\providecommand{\algorithmname}{Algorithm}
\theoremstyle{plain}
\newtheorem{thm}{\protect\theoremname}
  \theoremstyle{plain}
  \newtheorem{lem}[thm]{\protect\lemmaname}
  \providecommand{\lemmaname}{Lemma}
\providecommand{\theoremname}{Theorem}
\journal{Computational Statistics and Data Aanlysis}
\begin{document}

\begin{frontmatter}

\title{A note on the lack of symmetry in the graphical lasso}

\author[icme]{Benjamin T. Rolfs\corref{cor1}}
\ead{benrolfs@stanford.edu}
\author[stats]{Bala Rajaratnam}
\ead{brajarat@stanford.edu}
\cortext[cor1]{Corresponding author}
\address[icme]{Inst. for Computational and Applied Mathematics, Stanford University, Stanford, CA 94395}
\address[stats]{Department of Statistics, Stanford University, Stanford, CA 94305}

\begin{abstract}
{The graphical lasso (glasso) is a widely-used
fast algorithm for estimating sparse inverse covariance matrices.
The glasso solves an $\ell_{1}$ penalized maximum likelihood problem
and is available as an \texttt{R} library on CRAN. The output from the glasso, a regularized covariance matrix estimate 
$\hat{\Sigma}_{glasso}$ and a
sparse inverse covariance matrix estimate $\hat{\Omega}_{glasso}$,
not only identify a graphical model but can also serve as intermediate
inputs into multivariate procedures such as PCA, LDA, MANOVA, and others.
The glasso indeed produces a covariance matrix estimate $\hat{\Sigma}_{glasso}$ which solves the $\ell_1$ penalized optimization problem in a dual sense; however, the method for producing $\hat{\Omega}_{glasso}$ after this optimization is inexact and may produce asymmetric estimates. This problem is exacerbated when the amount of
$\ell_{1}$ regularization that is applied is small, which in turn is more likely to occur if the true 
underlying inverse covariance matrix is not sparse. The lack of symmetry
can potentially have consequences. First, it implies that $\hat{\Sigma}_{glasso}^{-1}\neq\hat{\Omega}_{glasso}$
and second, asymmetry can possibly lead to negative or complex eigenvalues,
rendering many multivariate procedures which may depend on $\hat{\Omega}_{glasso}$
unusable. We demonstrate this problem, explain its causes, and propose
possible remedies.}
\end{abstract}

\begin{keyword}
Concentration model selection \sep glasso \sep Graphical Gaussian Models \sep graphical lasso \sep $\ell_{1}$ regularization.
\end{keyword}

\end{frontmatter}


\section{Introduction}
\label{Sec:intro}

In modern applications, many data sets are simultaneously high-dimensional
and low in sample size. Classic examples include microarray gene expression
and SNP data. Dealing with such datasets has become an area of great
interest in many fields such as biostatistics. Algorithms such as the graphical lasso
\citep{Friedman08, Hastie09} have been proposed to obtain regularized covariance
estimators in the $n\ll p$ setting (where $n$ is the sample size
and $p$ is the problem dimension) as well as perform graphical model
selection. 

In the case of the graphical lasso, graphical model selection involves
inferring a concentration graph (or equivalently, a Markov model).
A concentration graph encodes zeros in the inverse covariance (concentration)
matrix, i.e., 
$i\not\not\sim j$ for $i,j\in\left\{ 1,\dots,p\right\} $
in the graph implies that the partial correlation 
$\rho\left(X_{i},X_{j}|X_{k\notin\left\{ i,j\right\} }\right)=0$.
Along with inferring such a graph, the glasso provides $p\times p$
dimensional matrix estimators for both the covariance and concentration
matrices, denoted 
$\hat{\Sigma}_{\lambda}$ and $\hat{\Omega}_{\lambda}$
respectively, for a given penalty parameter $\lambda>0$. In particular,
$\hat{\Omega}_{\lambda}$ is the solution to the convex maximization problem 
\begin{alignat}{1}
	\hat{\Omega}_{\lambda}=\hat{\Sigma}_{\lambda}^{-1} & =\arg\min_{X\succ0}\ 			\left[\log\det\left(X\right)-\mbox{tr}\left(SX\right)-\lambda\left\Vert X\right\Vert 	_{1}\right]\label{eq:glassoPrimal}
\end{alignat}
where $S$ is the sample covariance matrix, 
$X=\left\{ x_{ij}\right\} _{i,j=1}^{p}$
is positive definite and 
$\left\Vert X\right\Vert _{1}=\sum_{i,j}\left|x_{ij}\right|$.
The non-zero elements of 
$\hat{\Omega}_{\lambda}$ 
correspond to edges in the estimated concentration graph. 

In some applications, graphical model selection is the primary goal,
where in other situations the estimators $\hat{\Sigma}_{\lambda}$
and $\hat{\Omega}_{\lambda}$ are used as inputs into other multivariate
algorithms where a regularized covariance estimator is required. Typical
examples include LDA, PCA, and MANOVA. Hence, it is often necessary that not only 
$\hat{\Sigma}_{\lambda}^{T}=\hat{\Sigma}_{\lambda}\succ0$,
but also that 
$\hat{\Omega}_{\lambda}^{T}=\hat{\Omega}_{\lambda}$,
$\hat{\Omega}_{\lambda}\succ0$, 
and 
$\hat{\Omega}_{\lambda}^{-1}=\hat{\Sigma}_{\lambda}$.
We find that the output of the graphical lasso does not meet these
conditions in certain situations, explain why, and discuss how to
solve this problem. Such situations arise primarily when $S$ is rank-deficient
and $\lambda$ is small. A low level of regularization is required when the true underlying concentration matrix is not sparse. It should however be noted that the glasso algorithm
does indeed solve the dual problem corresponding to \eqref{eq:glassoPrimal},
so the above assertions should be interpreted in context.

\section{Motivating Examples}
\label{Sec:examples}

We now present two motivating examples, one in a classical setting
and another in a high-dimensional setting, to illustrate the problem.

\subsection{Example 1: Low dimensional, large sample size inverse covariance
estimation}

Consider $n=500$ \textit{i.i.d.} samples drawn from a $p=5$
dimensional multivariate Gaussian distribution with mean $\mu=0$
and concentration matrix:
\begin{alignat*}{1}
	\Omega & =
	\left[\begin{array}{ccccc}
	\ 2.425 & \ 0.069 & -0.885 & 0 & 0\\
	\ 0.069 & \ 2.944 & -0.129 & 0.988 & 0\\
	-0.885 & -0.129 & \ 2.696 & 0.035 & -0.974\\
	0 & \ 0.988 & \ 0.035 & 1.724 & \ 0.851\\
	0 & 0 & -0.974 & 0.851 & \ 1.000
	\end{array}\right]
\end{alignat*}

The glasso algorithm was applied to this data set. A regularization
parameter of $\lambda=0.0033$, which is close to the cross-validated
estimate, was chosen to demonstrate the problem. The glasso estimators
for $\Omega$ and $\Sigma=\Omega^{-1}$ for a given $\lambda$ are denoted 
$\hat{\Omega}_{\lambda}$ and $\hat{\Sigma}_{\lambda}$. 

For reasons which are clarified in Section \ref{Sec:causes}, the glasso produces
estimators which are neither symmetric nor true inverses of one another,
i.e., 
$\hat{\Omega}_{\lambda}^{T}\neq\hat{\Omega}_{\lambda}$ and
$\hat{\Sigma}_{\lambda}^{-1}\neq\hat{\Omega}_{\lambda}$. 
To quantify
the lack of symmetry, consider the matrix of relative errors between
the elements of $\hat{\Omega}_{\lambda}$ and $\hat{\Omega}_{\lambda}^{T}$,
as defined by 
$Err_{ij}=100\left|\frac{\hat{\Omega}_{\lambda}\left(i,j\right)-\hat{\Omega}_{\lambda}^{T}\left(i,j\right)}{\hat{\Omega}_{\lambda}\left(i,j\right)}\right|\%$.

For the numerical example above, 
\begin{alignat*}{2}
	Err & =\left[\begin{array}{ccccc}
	0 & 1.94 & 0.05 & 0 & 0.25\\
	1.98 & 0 & 2.84 & 0.04 & \infty\\
	0.05 & 2.77 & 0 & 0.88 & 0.04\\
	0 & 0.04 & 0.89 & 0 & 0.01\\
	0.25 & 100.00 & 0.04 & 0.01 & 0
	\end{array}\right]
\end{alignat*}
with the convention that if $\hat{\Omega}_{\lambda}\left(i,j\right)=0=\hat{\Omega}_{\lambda}\left(j,i\right)$
then $Err_{ij}=0$. Note that the entries $Err_{5,2}=100\%$ and $Err_{2,5}=\infty$
occur because $\hat{\Omega}_{\lambda}\left(5,2\right)\neq0$ while
$\hat{\Omega}_{\lambda}\left(2,5\right)=0$. 

Although the relative errors are small, i.e., on the order of $2\%$,
there is a clear lack of symmetry in $\hat{\Omega}_{\lambda}$ and
moreover the sparsity patterns in the upper and lower parts of $\hat{\Omega}_{\lambda}$
are different, and thus yield two different graphical models. In particular,
$\hat{\Omega}_{\lambda}\left(5,2\right)\neq0$ which indicates an
edge between variables 2 and 5, while $\hat{\Omega}_{\lambda}\left(2,5\right)=0$
indicates the absence of such. Furthermore, in high-dimensional examples,
a graph is often calculated automatically when $\left(\hat{\Omega}_{\lambda}\right)_{ij}>\epsilon$
for some small $\epsilon$. In such cases, a lack of symmetry may
result, yielding two separate graphs.

\subsection{Example 2: High dimensional, low sample size autoregressive model}

The lack of symmetry in $\hat{\Omega}_{\lambda}$, and the resulting
difference in the concentration graphs corresponding to the upper
and lower parts of $\hat{\Omega}_{\lambda}$, often becomes more pronounced
as the dimension $p$ grows. 

We now consider a high dimensional example with $n=250$ \textit{i.i.d.}
samples drawn from a Gaussian $\mbox{AR}(1)$ model such that $X_{t+1}=\phi X_{t}+\epsilon_{t}$
for $t=2,...,p$ and $X_{1}=\epsilon_{1}$. Here, $p=500$, $\phi=0.75$,
and $\epsilon_{t}\overset{i.i.d}{\sim}\mathcal{N}\left(0,1\right)$,
$t=1,\dots,p$. The concentration matrix $\Omega$ is tridiagonal,
with the diagonal entries equal to 1 and the off-diagonal entries
equal to $-0.75$. 

Given a glasso estimator $\hat{\Omega}_{\lambda}$, let $E_{1}$ and
$E_{2}$ denote the edge sets corresponding to the upper and lower
halves of $\hat{\Omega}_{\lambda}$, respectively. Then the symmetric
difference $\left|E_{1}\Delta E_{2}\right|$ is the number of edges
which are present in the concentration graph encoded by one half of $\hat{\Omega}_{\lambda}$
but not in the graph encoded by the other half. 

The glasso algorithm was applied to samples from the above model with
the regularization parameter $\lambda$ taking values between $0.001$
and $0.03$ in increments of $0.001$. To put these values in perspective,
note that when $\lambda=0.03$, $102,278$ out of $124,750$ $\left(82\%\right)$
of the estimated off-diagonal entries were $0$. The number of edge
differences $\left|E_{1}\Delta E_{2}\right|$ corresponding to $\hat{\Omega}_{\lambda}$
as $\lambda$ varies between $0.001$ and $0.03$ is shown in Figure \ref{fig:glassoErrors_fig1}. 

\begin{figure}[h]
	\centering
	\caption{$\left|E_{1}\Delta E_{2}\right|$ vs. $\lambda$ for an $AR(1)$ model
			 with $\phi=0.75$, $p=500$, and $n=250$. The red dashed line is
		     at $\left|E_{1}\Delta E_{2}\right|=20$.\label{fig:glassoErrors_fig1} }
	\includegraphics{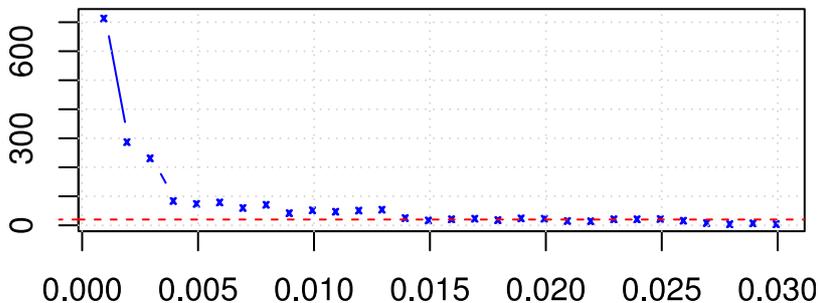}
\end{figure}

Note that at small values of $\lambda$, the difference in the graphs
corresponding to the upper and lower parts of $\hat{\Omega}_{\lambda}$
as denoted by $\left|E_{1}\Delta E_{2}\right|$ can be substantial.
Hence, the lack of symmetry in $\hat{\Omega}_{\lambda}$ can result
in two completely different graphical models. Moreover, although$\left|E_{1}\Delta E_{2}\right|$
decreases as $\lambda$ increases it nevertheless remains nonzero
as regularization increases.

\subsection{Consequences of asymmetry in the glasso concentration matrix estimator }

Users of the glasso may find the lack of symmetry a problem for a
number of reasons:
\begin{enumerate}
\item $\hat{\Omega}_{\lambda}$ is not a mathematically valid estimator
	for $\hat{\Omega}$, since $\hat{\Omega}_{\lambda}^{T}\neq\hat{\Omega}_{\lambda}$
	and $\hat{\Omega}_{\lambda}\neq\hat{\Sigma}_{\lambda}^{-1}$. 
\item There is no guarantee that $\hat{\Omega}_{\lambda}$ has real positive
	eigenvalues. If it has negative or complex eigenvalues, many multivariate
	procedures such as LDA and PCA may not be well-defined.
\item There may be differences between the edge sets of the concentration
	graphs corresponding to the respective upper and lower halves of $\hat{\Omega}_{\lambda}$. 
\end{enumerate}
We examine the causes of the lack of symmetry in Section \ref{Sec:causes} and suggest
possible remedies in Section \ref{Sec:solutions}. 

\section{Cause of Asymmetry in the Glasso Concentration Matrix Estimator}
\label{Sec:causes}

The glasso algorithm, taken directly from \cite{Hastie09}, is shown
in Algorithm \ref{Alg:glasso_alg1}. For further details concerning the glasso and its
convergence, see \cite{Friedman08} and \cite{Hastie09}. In Algorithm \ref{Alg:glasso_alg1}, $S$ is the
sample covariance matrix, $\lambda$ is the glasso penalty parameter,
and $W$ is a matrix on which the glasso iterates. In Step 2 of Algorithm
\ref{Alg:glasso_alg1}, $W_{11}$ refers to the submatrix of $W$ without its $j^{th}$
row and column, and $s_{12}$ is the $j^{th}$ column of the sample
covariance matrix without the diagonal element $s_{jj}$. In Step
3 of Algorithm \ref{Alg:glasso_alg1}, $\hat{\theta}_{12}$ for a given $j$ is the
$j^{th}$ column of the matrix $\Theta$ without $\Theta_{jj}$. Upon
termination of the algorithm, the current iterate $W$ is set to $\hat{\Sigma}_{\lambda}$
and $\Theta$ is set to $\hat{\Omega}_{\lambda}$, and referred to
as the glasso estimators. 

\begin{algorithm}[h]
	\centering
	\includegraphics[width=4.5in]{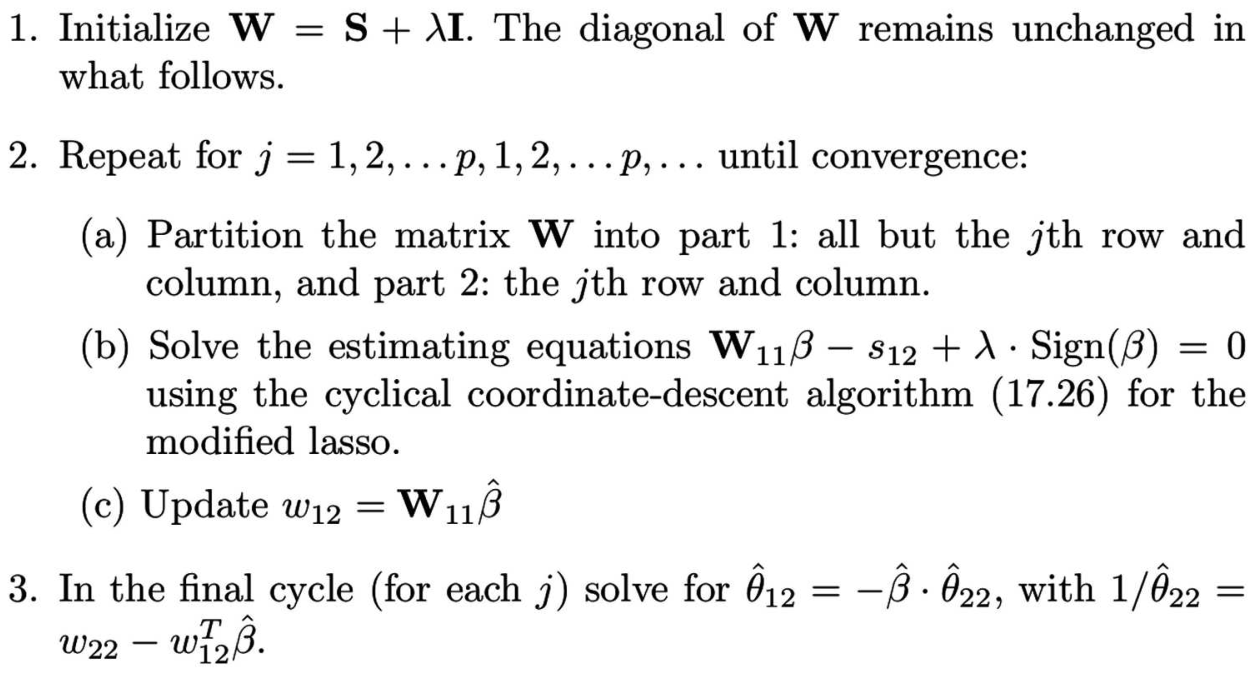}
	\caption{The glasso, exactly as it appears on p. 636 of
	\cite{Hastie09}. \label{Alg:glasso_alg1}}
\end{algorithm}

\subsection{Construction of $\hat{\Omega}_{\lambda}$ in the glasso}

The glasso iteratively updates a matrix $W$ which converges numerically
to $\hat{\Sigma}_{\lambda}$, the glasso estimator for the population
covariance matrix $\Sigma$. In contrast, the estimator $\hat{\Omega}_{\lambda}$
for the precision matrix $\Omega$ is constructed only upon convergence,
i.e., only after the algorithm terminates. As we shall show below,
the process by which $\hat{\Omega}_{\lambda}$ is constructed avoids
inversion but is however mathematically inexact in the sense that
it leads to $\hat{\Omega}_{\lambda}^{T}\neq\hat{\Omega}_{\lambda}$
and $\hat{\Omega}_{\lambda}^{-1}\neq\hat{\Sigma}_{\lambda}$ . If
$\hat{\Omega}_{\lambda}^{T}\neq\hat{\Omega}_{\lambda}$, the graph
encoded by the glasso output $\hat{\Omega}_{\lambda}$ may be different
from the graph encoded by $\hat{\Sigma}_{\lambda}^{-1}$. This problem
was illustrated in the two motivating examples above. 

Step 2 of Algorithm \ref{Alg:glasso_alg1} involves an inner loop in
which row/column $1,\dots,p$ of $W$ are sequentially updated. For
one full inner loop over the $p$ rows and columns of $W$, let the
$p$ successive estimates be denoted $W^{\left(i\right)}$ for $i=1,\dots,p$.
Exactly one row and column of $W^{\left(i\right)}$ is updated using
a lasso coefficient $\hat{\beta}^{\left(i\right)}$ ($\hat{\beta}$
of Step 2 in Algorithm \ref{Alg:glasso_alg1}).

We now introduce additional notation in order to illustrate the problems
encountered when the glasso constructs an estimate of the concentration
matrix (recall that this takes place upon termination of the glasso
algorithm). Consider once more $W^{\left(i\right)}$ for $i=1,\dots,p$.
Define $\Theta^{\left(i\right)}\triangleq\left(W^{\left(i\right)}\right)^{-1}$
and $\theta_{-i,i}^{\left(i\right)}$ to be the $\left(p-1\right)$
vector consisting of the $i^{th}$ column of $\Theta^{\left(i\right)}$
excluding the diagonal entry $\theta_{ii}^{\left(i\right)}$. Define
$w_{-i,i}^{\left(i\right)}$ and $w_{ii}^{\left(i\right)}$ be the
corresponding elements of $W^{\left(i\right)}$and let $W_{-i,-i}^{\left(i\right)}$
be the $i^{th}$ principal minor of $W^{\left(i\right)}$. Then using
the fact that $\Theta^{\left(i\right)}\triangleq\left(W^{\left(i\right)}\right)^{-1}$,
there is a closed-form expression for $\theta_{ii}^{\left(i\right)}$
and $\theta_{-i,i}^{\left(i\right)}$ in terms of $s_{ii}$, $w_{-i,i}^{\left(i\right)}$,
and $\hat{\beta}^{\left(i\right)}$: 

\begin{alignat}{3}
\label{eq:closedFormUpdates}
\theta_{-i,i}^{\left(i\right)} & =-\hat{\beta}^{\left(i\right)}\theta_{ii}^{\left(i\right)}, & \qquad \qquad
\theta_{ii}^{\left(i\right)} & =\frac{1}{w_{ii}^{\left(i\right)}-\left(w_{-i,i}^{\left(i\right)}\right)^{T}\hat{\beta}^{\left(i\right)}}.
\end{alignat}

When the glasso terminates, it sets $\hat{\Sigma}_{\lambda}=W^{\left(p\right)}$
and uses \eqref{eq:closedFormUpdates} to compute $\left\{ \theta_{ii}^{\left(i\right)},\theta_{-i,i}^{\left(i\right)}\right\} _{i=1}^{p}$,
which are taken as the columns of $\hat{\Omega}_{\lambda}$. This
procedure has a complexity of $\mathcal{O}\left(p^{2}\right)$ and
is therefore more efficient than direct numerical inversion.

\subsection{Cause of asymmetry in $\hat{\Omega}_{\lambda}$ }

The glasso terminates when $W$ converges numerically, and constructs
$\hat{\Omega}_{\lambda}$ from $\left\{ \theta_{-i,i}^{\left(i\right)},\theta_{ii}^{\left(i\right)}\right\} _{i=1}^{p}$.
These are easily obtainable from previous iterations of the inner
loop, thus avoiding the need to invert $W^{\left(p\right)}$. However,
while $\left\{ \theta_{-p,p}^{\left(p\right)},\theta_{pp}^{\left(p\right)}\right\} $
is equal to the the $p$th row and column of $\Theta^{\left(p\right)}=\left(W^{\left(p\right)}\right)^{-1}\triangleq\hat{\Sigma}_{\lambda}^{-1}$by
construction, the $\left\{ \theta_{ii}^{\left(i\right)},\theta_{i,-i}^{\left(i\right)}\right\} _{i=1}^{p-1}$
are not equal to the $i^{th}$ row and column of $\left(W^{\left(p\right)}\right)^{-1}$.
Instead, by construction each $\left\{ \theta_{ii}^{\left(i\right)},\theta_{i,-i}^{\left(i\right)}\right\} _{i=1}^{p-1}$
is equal to the $i^{th}$ row and column of $\left(W^{\left(i\right)}\right)^{-1}\neq\left(W^{\left(p\right)}\right)^{-1}$.
Asymmetry occurs because the quantities $\left\{ \theta_{ii}^{\left(i\right)},\theta_{i,-i}^{\left(i\right)}\right\} _{i=1}^{p}$
are taken as the columns of $\hat{\Omega}_{\lambda}$. 

The discrepancy between the above set of estimates may not be minimal
even if the iterates $W^{\left(i\right)}$ are approximately equal.
Another way of stating this problem is that convergence of the $W^{\left(i\right)}$
to a specified tolerance does not necessarily imply convergence of
$\left(W^{\left(i\right)}\right)^{-1}$ to any given tolerance. The
result is that while the glasso covariance estimator $\hat{\Sigma}_{\lambda}$
satisfies \eqref{eq:glassoPrimal}, $\hat{\Omega}_{\lambda}$ does
not, leading to the aforementioned problems. The problem is exacerbated
when the penalty parameter $\lambda$ is small and $S$ is close to
rank-deficient (which is the case when $n\ll p$). The following lemma
formalizes this assertion. 

\begin{lem}
	If $S$ is rank-deficient, the maximum absolute value of the entries
	of $\hat{\Omega}_{\lambda}$ diverges as $\lambda\rightarrow0$.
	\label{Lemma:IllCondition}
\end{lem}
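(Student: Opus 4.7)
The plan is to combine two observations: (i) the glasso covariance estimator $\hat{\Sigma}_{\lambda}$ converges to the singular matrix $S$ as $\lambda \to 0$, which forces $\hat{\Sigma}_{\lambda}^{-1}$ to blow up; and (ii) the diagonal entries $\theta_{ii}^{(i)}$ that the glasso writes onto the diagonal of $\hat{\Omega}_{\lambda}$ coincide, at convergence, with the diagonal entries of $\hat{\Sigma}_{\lambda}^{-1}$. Chaining the two will show that at least one diagonal entry of $\hat{\Omega}_{\lambda}$ diverges, which is sufficient.

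First I would invoke the standard dual of \eqref{eq:glassoPrimal}, namely that $\hat{\Sigma}_{\lambda}$ solves
\[
\max_{W \succ 0}\; \log\det W \quad \text{subject to} \quad \max_{i,j}\, |W_{ij} - S_{ij}| \leq \lambda.
\]
The box constraint yields $\|\hat{\Sigma}_{\lambda} - S\|_{F} \leq p\lambda$, so $\hat{\Sigma}_{\lambda} \to S$ as $\lambda \to 0$. Since $S$ is rank-deficient, $\lambda_{\min}(S)=0$, and by Weyl's inequality $\lambda_{\min}(\hat{\Sigma}_{\lambda}) \to 0$. Consequently
\[
\text{tr}\bigl(\hat{\Sigma}_{\lambda}^{-1}\bigr) \;\geq\; \frac{1}{\lambda_{\min}(\hat{\Sigma}_{\lambda})} \;\longrightarrow\; \infty .
\]

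Next I would rewrite \eqref{eq:closedFormUpdates} via the Schur complement identity. Because each inner update enforces $w_{-i,i}^{(i)} = W_{-i,-i}^{(i)} \hat{\beta}^{(i)}$, we have $\hat{\beta}^{(i)} = (W_{-i,-i}^{(i)})^{-1} w_{-i,i}^{(i)}$, and therefore
\[
\theta_{ii}^{(i)} \;=\; \frac{1}{w_{ii}^{(i)} - (w_{-i,i}^{(i)})^{T}(W_{-i,-i}^{(i)})^{-1} w_{-i,i}^{(i)}} \;=\; \bigl((W^{(i)})^{-1}\bigr)_{ii}.
\]
At convergence of the outer loop, $W^{(1)} = \cdots = W^{(p)} = \hat{\Sigma}_{\lambda}$, so $\theta_{ii}^{(i)} = (\hat{\Sigma}_{\lambda}^{-1})_{ii}$ and $\sum_{i=1}^{p} \theta_{ii}^{(i)} = \text{tr}(\hat{\Sigma}_{\lambda}^{-1})$. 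Since $\hat{\Sigma}_{\lambda} \succ 0$ each summand is positive, so
\[
\max_{i,j} |\hat{\Omega}_{\lambda}(i,j)| \;\geq\; \max_{i}\, \theta_{ii}^{(i)} \;\geq\; \tfrac{1}{p}\,\text{tr}\bigl(\hat{\Sigma}_{\lambda}^{-1}\bigr) \;\longrightarrow\; \infty.
\]

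The main obstacle is subtle and is precisely the phenomenon the paper is about: the asymmetry of $\hat{\Omega}_{\lambda}$ arises because the iterates $W^{(i)}$ are not all strictly equal to $\hat{\Sigma}_{\lambda}$ when the outer loop is stopped at a finite tolerance, so the identity $\theta_{ii}^{(i)} = (\hat{\Sigma}_{\lambda}^{-1})_{ii}$ must be interpreted as an idealized convergence statement. A fully rigorous version would need to control $\|(W^{(i)})^{-1} - \hat{\Sigma}_{\lambda}^{-1}\|$ uniformly in $i$ and verify that this perturbation is dominated by the divergence of $\text{tr}(\hat{\Sigma}_{\lambda}^{-1})$; alternatively, one can adopt the usual convention of treating the glasso as having run to exact numerical convergence, in which case the identification above is exact.
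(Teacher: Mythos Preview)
Your argument is correct and follows the same skeleton as the paper's: use the dual box constraint to force $\hat{\Sigma}_{\lambda}\to S$, invoke continuity of eigenvalues to get $\lambda_{\min}(\hat{\Sigma}_{\lambda})\to 0$, and then lower-bound the maximum entry of the inverse by a quantity that blows up. The paper uses the operator-norm chain $\max_{i,j}|(\hat{\Sigma}_{\lambda}^{-1})_{ij}|\geq p^{-1}\|\hat{\Sigma}_{\lambda}^{-1}\|_{\infty}\geq p^{-1}\|\hat{\Sigma}_{\lambda}^{-1}\|_{2}=p^{-1}[\lambda_{\min}(\hat{\Sigma}_{\lambda})]^{-1}$ in place of your trace bound, but that is a cosmetic difference.

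Where you diverge is in your step~(ii): you route the argument through the algorithmic construction of $\hat{\Omega}_{\lambda}$ via \eqref{eq:closedFormUpdates} and the Schur complement, and then worry about whether $W^{(i)}=\hat{\Sigma}_{\lambda}$ exactly. The paper avoids all of this by reading $\hat{\Omega}_{\lambda}$ in the lemma as the true primal optimizer of \eqref{eq:glassoPrimal}, i.e.\ it simply opens with the identification $\max_{i,j}|(\hat{\Omega}_{\lambda})_{ij}|=\max_{i,j}|(\hat{\Sigma}_{\lambda}^{-1})_{ij}|$ and never touches the iterates. Your detour is not wrong, but it is unnecessary for the lemma as stated, and it is the source of the ``main obstacle'' you flag at the end --- an obstacle the paper's proof does not face because the lemma is a statement about the optimizer, not about the glasso's inexact reconstruction of it.
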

\begin{proof}
See the appendix. 
\end{proof}

Lemma \eqref{Lemma:IllCondition} suggests that convergence of the inverse glasso iterates $W^{-1}$ to some small, fixed tolerance may require a radically small tolerance criterion for the convergence of the glasso iterates $W$. Indeed, it is easy to construct such examples. Consider the rank-deficient sample covariance matrix $S$ shown below alongside the optimal solution to \eqref{eq:glassoPrimal} corresponding to a regularization parameter of $\lambda = 10^{-6}$: 
\begin{alignat}{3}
S & =\left[\begin{array}{cc}
1 & 0\\
0 & 0
\end{array}\right], \ \ \ & \hat{\Sigma}_{\lambda} & =\left[\begin{array}{cc}
1+10^{-6} & 0\\
0 & 10^{-6}
\end{array}\right], \ \ \ & \hat{\Omega}_{\lambda} & =\left[\begin{array}{cc}
\left(1+10^{-6}\right)^{-1} & 0\\
0 & 10^{6}
\end{array}\right].
\end{alignat}
Moreover, consider a matrix iterate $W_{t}$ which is close to $\hat{\Sigma}_{\lambda}$, given as follows
\begin{alignat}{1}
W_{t} & =\left[\begin{array}{cc}
1+10^{-6} & 0\\
0 & (1+t^{-1})\times 10^{-6}
\end{array}\right].
\end{alignat}
The supremum-norm errors on the dual and primal for this $W_t$ are given respectively by
\begin{alignat}{2}
\left\Vert W_{t}-\hat{\Sigma}_{\lambda}\right\Vert _{\infty} & =t^{-1} 10^{-6} \ \ \mbox{  and  } \ \ 
\left\Vert W_{t}^{-1}-\hat{\Omega}_{\lambda}\right\Vert _{\infty} & = \frac{1}{t+1}10^{6},
\end{alignat}
and are several orders of magnitude apart. This example demonstrates why decreasing the convergence tolerance on the dual iterates $W$ may not always be a feasible solution to the asymmetry problem discussed here. 

To summarize, the method for inversion used during the final step of the glasso algorithm for computing $\hat{\Omega}_\lambda$ is mathematically inexact, and the resulting error is exacerbated when $p>n$ with an insufficiently large penalty parameter $\lambda$. In this case, the $\ell_1$-penalized inverse covariance estimator is unreliable as $\lambda \rightarrow 0$, as described by Lemma \ref{Lemma:IllCondition} Therefore the use of an overly small $\lambda$ should be avoided; however, in practice, choosing the penalty parameter $\lambda$ can be challenging. For example, choosing $\lambda$ via cross-validation using $\hat{\Omega}_\lambda$ tends to yield overly small $\lambda$, which may produce dense and possibly ill-conditioned estimates for $\hat{\Omega}_\lambda$. One possible indicator of too little regularization is when the number of neighbors of each variable/node is too high. A second possible indicator is if there is a serious lack of symmetry in the glasso estimates. A third possible indicator is  when the condition number of the resulting estimate is too high. Recently, \cite{Won2012Condition} provide impetus for constraining the condition number of the covariance matrix; in light of that work, the condition number can perhaps be used as a guide in choosing $\lambda$.

\section{Enforcing Symmetry on the Glasso Concentration Matrix Estimator}
\label{Sec:solutions}

The glasso covariance estimator $\hat{\Sigma}_{\lambda}$ is the true
numeric minimum of the glasso problem \eqref{eq:glassoPrimal} and
thus a valid $\ell_{1}$ regularized estimator for the true population
covariance matrix $\Sigma$. However, as previously demonstrated,
the glasso estimator $\hat{\Omega}_{\lambda}$ is asymmetric, and
$\hat{\Omega}_{\lambda}^{-1}\neq\hat{\Sigma}_{\lambda}$.

In some settings, it may be desirable to resolve one or both of the
aforementioned issues. For $\hat{\Omega}_{\lambda}$ to encode a sparse
concentration graph, its sparsity pattern must be symmetric. Moreover,
if $\hat{\Omega}_{\lambda}$ is to be used as a sparse concentration
matrix estimator, it is necessary that $\hat{\Omega}_{\lambda}^{T}=\hat{\Omega}_{\lambda}$
for it to be a valid estimator. Most importantly, it may be required
that $\hat{\Omega}_{\lambda}=\hat{\Sigma}_{\lambda}^{-1}\succ0$ in
order for it to be usable in multivariate procedures. 

We propose three simple approaches which address some or all of the
above requirements. 

\begin{enumerate}
	\item \textbf{Numerical inversion. } To have $\hat{\Omega}_{\lambda}=\hat{\Sigma}_{\lambda}^{-1}$,
		it is necessary to directly invert $\hat{\Sigma}_{\lambda}$. This
		inversion maintains the sparsity pattern of $\hat{\Omega}_{\lambda}$(although
		as a consequence of numerical error there may be negligible entries
		in place of zeroes). The $\mathcal{O}(p^3)$ complexity of numerical inversion (vs. $\mathcal{O}(p^2)$ for the current glasso approach) does not represent a difficulty for matrices of the dimension which the glasso is currently able to solve (up to $p \approx 2000$ on a typical desktop), as it also needs to be done only once at the end of the glasso iteration. Note that the inverted matrix $\hat{\Sigma}_{\lambda}^{-1}$ should be hard thresholded to eliminate small non-zero entries introduced by numerical error, and the resulting inverse covariance matrix then should be checked for positive definiteness. However, numerical inversion may not be a viable option when $\hat{\Sigma}_{\lambda}$ is ill-conditioned, as is the case when $S$ is rank-deficient and
$\lambda$ is small. In such cases, it may be useful to exercise caution when using $\hat{\Omega}_\lambda$ in further calculations.

	\item \textbf{Modified glasso output. }The upper right triangle of $\hat{\Omega}_{\lambda}$
		can be taken as the correct estimate. The entries corresponding to
		the upper right triangle are more recent updates than those in the
		lower left triangle, since the glasso inserts the $\left\{ \theta_{-i,i}^{\left(i\right)},		\theta_{ii}^{\left(i\right)}\right\} _{i=1}^{p}$
		into the columns of $\hat{\Omega}_{\lambda}$. The resulting estimator will not equal
		$\hat{\Sigma}_{\lambda}^{-1}$, but it is symmetric. It will not solve
		the primal problem in \eqref{eq:glassoPrimal} exactly. 
			
		\item \textbf{Iterative proportional fitting} \textbf{(IPF)}. IPF \citep{Speed86}
			can be used to simultaneously compute the maximum likelihood estimates
			for $\Omega$ and $\Sigma$ under an assumed concentration graph,
			i.e., sparsity pattern in $\hat{\Omega}$. One approach is to use
			the sparsity pattern from the upper right triangle of $\hat{\Omega}_{\lambda}$,
			enforce symmetry, and then use IPF to obtain $\hat{\Sigma}$ and $\hat{\Omega}$.
			The estimator $\hat{\Omega}$ will reflect the sparsity structure corresponding
			to $\hat{\Omega}_{\lambda}$, and satisfy $\hat{\Omega}=\hat{\Sigma}^{-1}$
			at each iteration of IPF. Note that neither $\hat{\Omega}$ nor $\hat{\Sigma}$ will be solutions to \eqref{eq:glassoPrimal} or (\ref{eq:dualProblem}), respectively. Furthermore, the computational complexity of IPF is $\mathcal{O}(c^3)$, where $c$ is the size of the largest maximal clique of the graph implied by $\hat{\Omega}_{\lambda}$. Therefore, IPF does not imply relatively higher computational costs, although it does require identifying the maximal cliques which is well-known to be NP-complete. Finding the maximal cliques can however be avoided if a modification of the glasso algorithm is used to estimate an undirected Gaussian graphical model with known structure (see Algorithm 17.1 in \cite{Hastie09}).			
\end{enumerate}

\color{black}
Table \ref{tab:comparison_tab1} summarizes the properties and tradeoffs
of each of the proposed solutions. 


\begin{table}[H]
	\caption{\label{tab:comparison_tab1}Comparison of possible estimators.}
	\begin{tabular}{cccccc}
		\hline 
		Method & $\hat{\Omega}^{T}=\hat{\Omega}$ & Latest updates in $\hat{\Omega}$ & 			 	$\hat{\Omega}=\hat{\Sigma}^{-1}$ & $\hat{\Sigma}$ solves (\ref{eq:dualProblem}) & $\hat{\Omega}$ solves (\ref{eq:glassoPrimal})\tabularnewline
		\hline 
		Glasso Output & $X$ & $X$ & $X$ & $\checkmark$ & $X$\tabularnewline
		
		Modified Output & $\checkmark$ & $\checkmark$ & $X$ & $\checkmark$ & $X$\tabularnewline
		
		Numerical Inversion & $\checkmark$ & $\checkmark$ & $\checkmark$ & $\checkmark$ & 		$\checkmark$\tabularnewline
		
		IPF & $\checkmark$ & $X$ & $\checkmark$ & $X$ & $X$\tabularnewline
		\hline 
	\end{tabular}
\end{table}

\section{Conclusions}

In this note we demonstrated that the estimators from the widely used \emph{R} package \emph{glasso} may be asymmetric when the amount of regularization applied is small. This could cause problems when the glasso estimators are used as inputs to other multivariate procedures, and additionally because the sparsity structure of the glasso estimators may themselves be asymmetric. It may be helpful for users of the package \emph{glasso} to be aware of this, as the estimator can be easily corrected by one of the outlined methods. Of these, numerical inversion followed by thresholding may be the simplest and most effective fix. The root cause of the issue is that the glasso algorithm operates on the dual of \eqref{eq:glassoPrimal}, and constructs the primal estimator, $\hat{\Omega}_\lambda$, only after the dual optimization completes. If a sparse concentration estimator is sought, it may be more natural to operate off the primal problem \eqref{eq:glassoPrimal}, though the glasso is more popular in practice. Methods for solving the primal \eqref{eq:glassoPrimal} have been recently considered, among others see \cite{Maleki2010} and \cite{MazumderPrimal2011}. This short note avoids recourse to the primal by identifying problems with the dual approach, and consequently explores ways in which these can be easily rectified so that the popular dual approach can be retained. 

\section*{Acknowledgments}

We acknowledge Trevor Hastie and Robert Tibshirani (Department of Statistics, Stanford University) for discussions.

Benjamin Rolfs was supported in part by the Department of Energy Office of Science Graduate Fellowship Program DE-AC05-06OR23100 (ARRA) and NSF grant AGS1003823. Bala Rajaratnam was supported in part by NSF grants DMS0906392 (ARRA), AGS1003823, DMS (CMG) 1025465, DMS1106642 and grants NSA H98230-11-1-0194, DARPA-YFA N66001-11-1-4131 and SUFSC10-SUSHSTF09-SMSCVISG0906. 

\bibliographystyle{model2-names}
\bibliography{refs_btrolfs}

\begin{thebibliography}{7}
\expandafter\ifx\csname natexlab\endcsname\relax\def\natexlab#1{#1}\fi
\expandafter\ifx\csname url\endcsname\relax
  \def\url#1{\texttt{#1}}\fi
\expandafter\ifx\csname urlprefix\endcsname\relax\def\urlprefix{URL }\fi
\providecommand{\eprint}[2][]{\url{#2}}
\providecommand{\bibinfo}[2]{#2}
\ifx\xfnm\relax \def\xfnm[#1]{\unskip,\space#1}\fi
\bibitem[{{Banerjee} et~al.(2008){Banerjee}, {El Ghaoui} and
  d'Aspremont}]{Banerjee08}
\bibinfo{author}{{Banerjee}, O.}, \bibinfo{author}{{El Ghaoui}, L.},
  \bibinfo{author}{d'Aspremont, A.}, \bibinfo{year}{2008}.
\newblock \bibinfo{title}{Model selection through sparse maximum likelihood
  estimation for multivarate gaussian or binary data}.
\newblock \bibinfo{journal}{Journal of Machine Learning Research}
  \bibinfo{volume}{9}, \bibinfo{pages}{485--516}.
\bibitem[{{Friedman} et~al.(2008){Friedman}, {Hastie} and
  {Tibshirani}}]{Friedman08}
\bibinfo{author}{{Friedman}, J.}, \bibinfo{author}{{Hastie}, T.},
  \bibinfo{author}{{Tibshirani}, R.}, \bibinfo{year}{2008}.
\newblock \bibinfo{title}{Sparse inverse covariance estimation with the
  graphical lasso}.
\newblock \bibinfo{journal}{Biostatistics} \bibinfo{volume}{9},
  \bibinfo{pages}{432--441}.
\bibitem[{{Hastie} et~al.(2009){Hastie}, {Tibshirani} and
  {Friedman}}]{Hastie09}
\bibinfo{author}{{Hastie}, T.}, \bibinfo{author}{{Tibshirani}, R.},
  \bibinfo{author}{{Friedman}, J.}, \bibinfo{year}{2009}.
\newblock \bibinfo{title}{Elements of Statistical Learning}.
\newblock \bibinfo{publisher}{Springer}, \bibinfo{address}{New York}.
  \bibinfo{edition}{2nd} edition.
\bibitem[{{Maleki} et~al.(2010){Maleki}, {Rajaratnam} and {Wong}}]{Maleki2010}
\bibinfo{author}{{Maleki}, A.}, \bibinfo{author}{{Rajaratnam}, B.},
  \bibinfo{author}{{Wong}, I.}, \bibinfo{year}{2010}.
\newblock \bibinfo{title}{Fast iterative thresholding algorithms for sparse
  inverse covariance Selection}.
\newblock \bibinfo{type}{Technical Report}.
\bibitem[{Mazumder and Agarwal(2011)}]{MazumderPrimal2011}
\bibinfo{author}{Mazumder, R.}, \bibinfo{author}{Agarwal, D.K.},
  \bibinfo{year}{2011}.
\newblock \bibinfo{title}{A flexible, scalable and efficient algorithmic
  framework for the \emph{Primal} graphical lasso}.
\newblock \bibinfo{journal}{Pre-print} \eprint{arXiv:1110.5508v1}.
\bibitem[{{Speed} and {Kiiveri}(1986)}]{Speed86}
\bibinfo{author}{{Speed}, T.}, \bibinfo{author}{{Kiiveri}, H.},
  \bibinfo{year}{1986}.
\newblock \bibinfo{title}{Gaussian markov distributions over finite graphs}.
\newblock \bibinfo{journal}{Annals of Statistics} \bibinfo{volume}{14},
  \bibinfo{pages}{138--150}.
\bibitem[{Won et~al.(2012)Won, Lim, Kim and Rajaratnam}]{Won2012Condition}
\bibinfo{author}{Won, J.}, \bibinfo{author}{Lim, J.}, \bibinfo{author}{Kim,
  S.}, \bibinfo{author}{Rajaratnam, B.}, \bibinfo{year}{2012}.
\newblock \bibinfo{title}{Condition number regularized covariance estimation}.
\newblock \bibinfo{journal}{Journal of the Royal Statistical Society Series B
  (in press)} .

\end{thebibliography}

\section*{Appendix}
\begin{proof}
	Consider the dual of \eqref{eq:glassoPrimal} as given in \cite{Banerjee08}:
	
	\begin{alignat}{1}
		\hat{\Sigma}_{\lambda} & =\arg\min_{X\succ0}\ 	\left[\log\det\left(X\right)\right]\label{eq:dualProblem}\\
		 & \mbox{s.t. }\max_{i,j}\left|x_{ij}-s_{ij}\right|\leq\lambda\nonumber 
	\end{alignat} 
	where $\max_{i,j}\left|m_{ij}\right|$ is the supremum norm, the maximum
	absolute value entry of the matrix $M$. From \eqref{eq:dualProblem},
	it is clear that $\hat{\Sigma}_{\lambda}\rightarrow S$ in the supremum
	norm as $\lambda\rightarrow0$, though at $\lambda=0$ the primal
	problem \eqref{eq:glassoPrimal} does not necessarily have a solution.
	Convergence in $\sup$-norm gives convergence of $\hat{\Sigma}_{\lambda}\rightarrow S$
	in any other operator norm $\left\Vert \bullet\right\Vert _{\star}$.
	In particular, invoking the continuity of eigenvalues, 	$\lambda_{\min}\left(\hat{\Sigma}_{\lambda}\right)\rightarrow\lambda_{\min}\left(S\right)$
	as $\lambda\rightarrow0$, with $\lambda_{\min}\left(M\right)$
	defined as the smallest eigenvalue of the square matrix $M$. Considering
	the operator $2$-norm and $\infty$-norm of $\hat{\Sigma}_{\lambda}^{-1}$
	gives: 
	\begin{alignat*}{1}
		\max_{i,j}\left|\left(\hat{\Omega}_{\lambda}\right)_{ij}\right| & =\max_{i,j}\left|		\left(\hat{\Sigma}_{\lambda}^{-1}\right)_{ij}\right|\\
	 	 & \geq p^{-1}\left\Vert \hat{\Sigma}_{\lambda}^{-1}\right\Vert _{\infty}\\
		 & \geq p^{-1}\left\Vert \hat{\Sigma}_{\lambda}^{-1}\right\Vert _{2}\\
		 & =p^{-1}\lambda_{\max}\left(\hat{\Sigma}_{\lambda}^{-1}\right)\\
		 & =p^{-1}\left[\lambda_{\min}\left(\hat{\Sigma}_{\lambda}\right)\right]^{-1}\\
		 & \underset{\lambda\rightarrow0}{\longrightarrow}p^{-1}\left[\lambda_{\min}\left(S\right)\right]^{-1}
	\end{alignat*}
	In the sample-deficient case $n\ll p$, $\lambda_{\min}\left(S\right)=0$
	almost surely, and therefore $\hat{\Omega}_{\lambda}$ diverges with respect to the supremum norm as  $\lambda\rightarrow0$. 
\end{proof}

\end{document}